\newcommand{\eqref}[1]{(\cref{#1})}
\newcommand\blfootnote[1]{%
  \begingroup
  \renewcommand\thefootnote{}\footnote{#1}%
  \addtocounter{footnote}{-1}%
  \endgroup
}
\begin{document}
\title{Multi-Query Shortest-Path Problem in\\ Graphs of Convex Sets}

\author{
Savva Morozov \and
Tobia Marcucci \and
Alexandre Amice \and
Bernhard Paus Graesdal \and
Rohan Bosworth \and
Pablo A. Parrilo \and
Russ Tedrake
}
\authorrunning{S. Morozov et al.}

\institute{Massachusetts Institute of Technology}

\maketitle              %

\begin{abstract}
The Shortest-Path Problem in Graph of Convex Sets (SPP in GCS) is a recently developed optimization framework that blends discrete and continuous decision making.
Many relevant problems in robotics, such as collision-free motion planning, can be cast and solved as an SPP in GCS, yielding lower-cost solutions and faster runtimes than state-of-the-art algorithms.
In this paper, we are motivated by motion planning of robot arms that must operate swiftly in static environments.
We consider a multi-query extension of the SPP in GCS, where the goal is to efficiently precompute optimal paths between given sets of initial and target conditions.
Our solution consists of two stages.
Offline, we use semidefinite programming to compute a coarse lower bound on the problem's cost-to-go function.
Then, online, this lower bound is used to incrementally generate feasible paths by solving short-horizon convex programs.
For a robot arm with seven joints, our method designs higher quality trajectories up to two orders of magnitude faster than existing motion planners.

\keywords{Control Theory and Optimization  \and  Motion and Path Planning \and Collision Avoidance}
\blfootnote{This work was supported by
Amazon.com Services LLC, PO No. 2D-12585006; 
The AI Institute; 
Dexai Robotics;
National Science Foundation, DMS-2022448,
UC Berkeley, 00010918.
Corresponding author is Savva Morozov, \texttt{savva@mit.edu}.
}

\end{abstract}

\section{Introduction}
A Graph of Convex Sets (GCS)~\cite{marcucci2024graphs} is a graph where each vertex is paired with a convex set and an optimization variable inside this set, while each edge couples adjacent vertex variables through additional convex costs and constraints.
In the Shortest-Path Problem (SPP) in GCS~\cite{marcucci2024shortest}, we simultaneously seek a discrete path through this graph and optimize the continuous variables associated with the vertices along the path, while minimizing the cumulative edge costs.

Though the SPP in GCS is NP-hard~\cite[Section~9.2]{marcucci2024shortest}, effective solution methods have been proposed in~\cite{marcucci2024shortest,chia2024gcs}.
This technique has shown remarkable success in various robotics applications, such as optimal control~\cite{marcucci2024shortest}, planning through contact~\cite{graesdal2024towards}, and other robotics problems~\cite{philip2024mixed,kurtz2023temporal,cohn2023non}.
In real-world hardware deployment, it has been especially effective in collision-free motion planning~\cite{marcucci2023motion}, addressing the challenges of non-convex obstacle avoidance constraints.
\begin{figure}[!t]
\includegraphics[width=\textwidth]{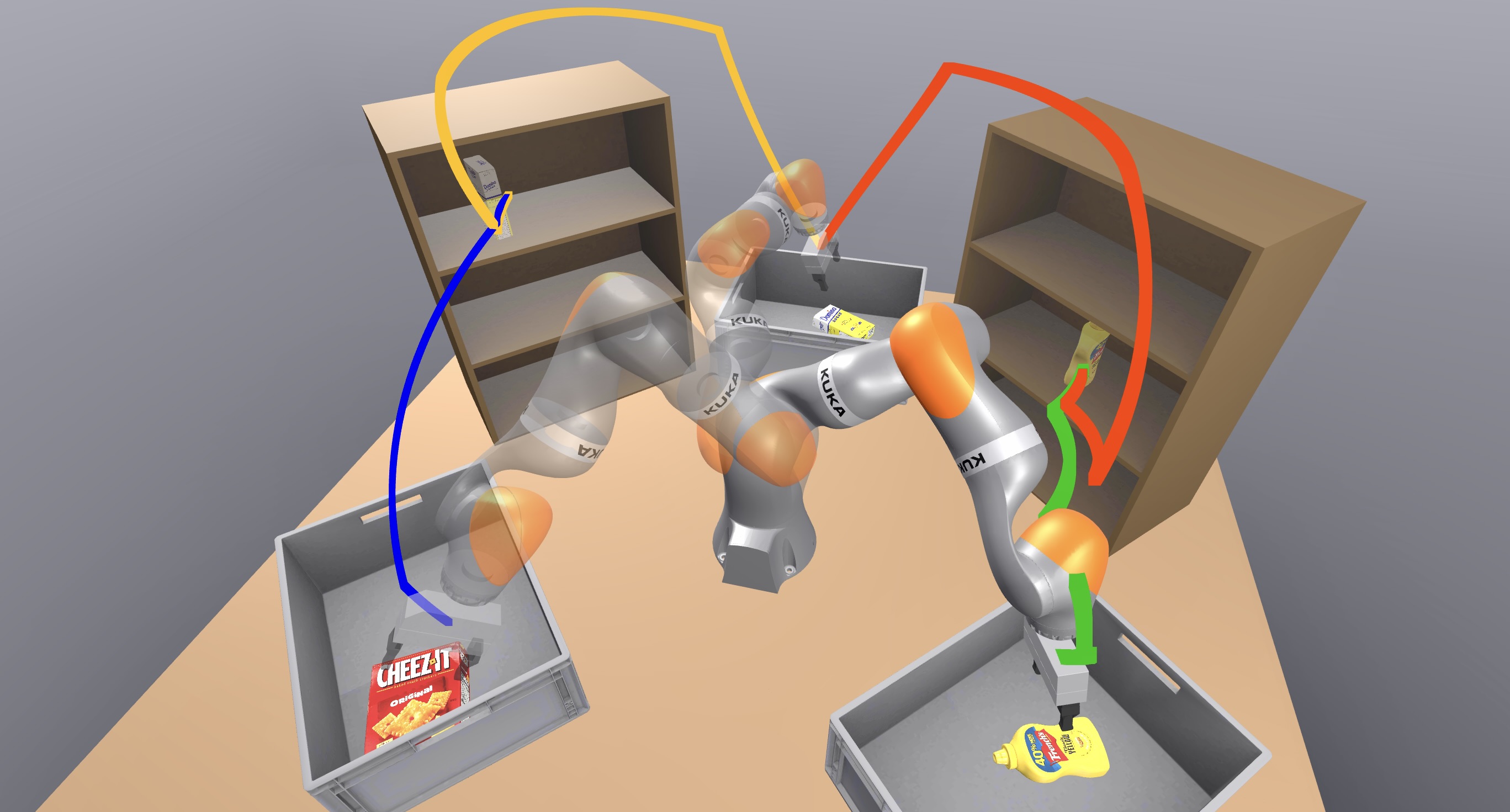}
\caption{
Robotic arm in a simulated environment, tasked with moving items between shelves and bins.
Shown are four queries for collision-free motion planning.
} \label{f-robot-arm-title}
\end{figure}
However, solving the SPP in GCS can sometimes be too slow for real-time applications on high-dimensional robotic systems.
Consider a 7-DoF KUKA iiwa robot arm repeatedly performing online motion planning in a static environment.
When the environment is simple, the GCS is small, and the shortest path queries can be solved quickly, in under 50ms~\cite{marcucci2023motion}.
However, when the environment is complex and the configuration space must be covered thoroughly, as in~\cref{f-robot-arm-title}, the GCS becomes large,
and the shortest path queries can take up to of 600ms.
This is not practical for high-productivity applications, such as robot arms in warehouses, where the company's income is nearly proportional to the operational speed.

In an effort to reduce solve times for online shortest path queries in GCS, we seek an efficient way of precomputing optimal paths between given sets of source and target conditions in the GCS.
We formulate this problem as a generalization of the SPP in GCS that is akin to the all-pairs generalization of the classical SPP.
Our solution contains two phases, illustrated in~\cref{f-gcs-example}.
Offline, we solve a semidefinite program that produces convex quadratic lower bounds to the cost-to-go function over the convex sets associated with GCS vertices.
Pictured in \cref{sf-g1} are the contour plots of these lower bounds at every vertex.
Then, online, we use a greedy multi-step lookahead policy with the cost-to-go lower bounds to determine the next vertex to visit.
Thus, as shown in \cref{sf-g2}, the path is obtained incrementally, one vertex at a time.
Though the quadratic cost-to-go lower bounds can be coarse, using the lookahead policy is equivalent to producing piecewise-quadratic lower bounds, which can be very expressive.
As a result, the obtained paths are nearly optimal in practice.
Convexity of the quadratic cost-to-go lower bounds allows us to evaluate the greedy policy by solving a set of small convex programs in parallel, which can be done quickly at runtime.
Applied to the complex scenario shown in \cref{f-robot-arm-title},
our method requires just 6s of offline computation to produce the cost-to-go lower bounds.
Subsequent online queries take 2-11ms, which is up to two orders of magnitude faster than solving the SPP in GCS from scratch.
\label{s-introduction}
\begin{figure}[t!]
    \centering
    \begin{subfigure}[t]{0.23\textwidth}
        \centering
        \includegraphics[width=\textwidth]{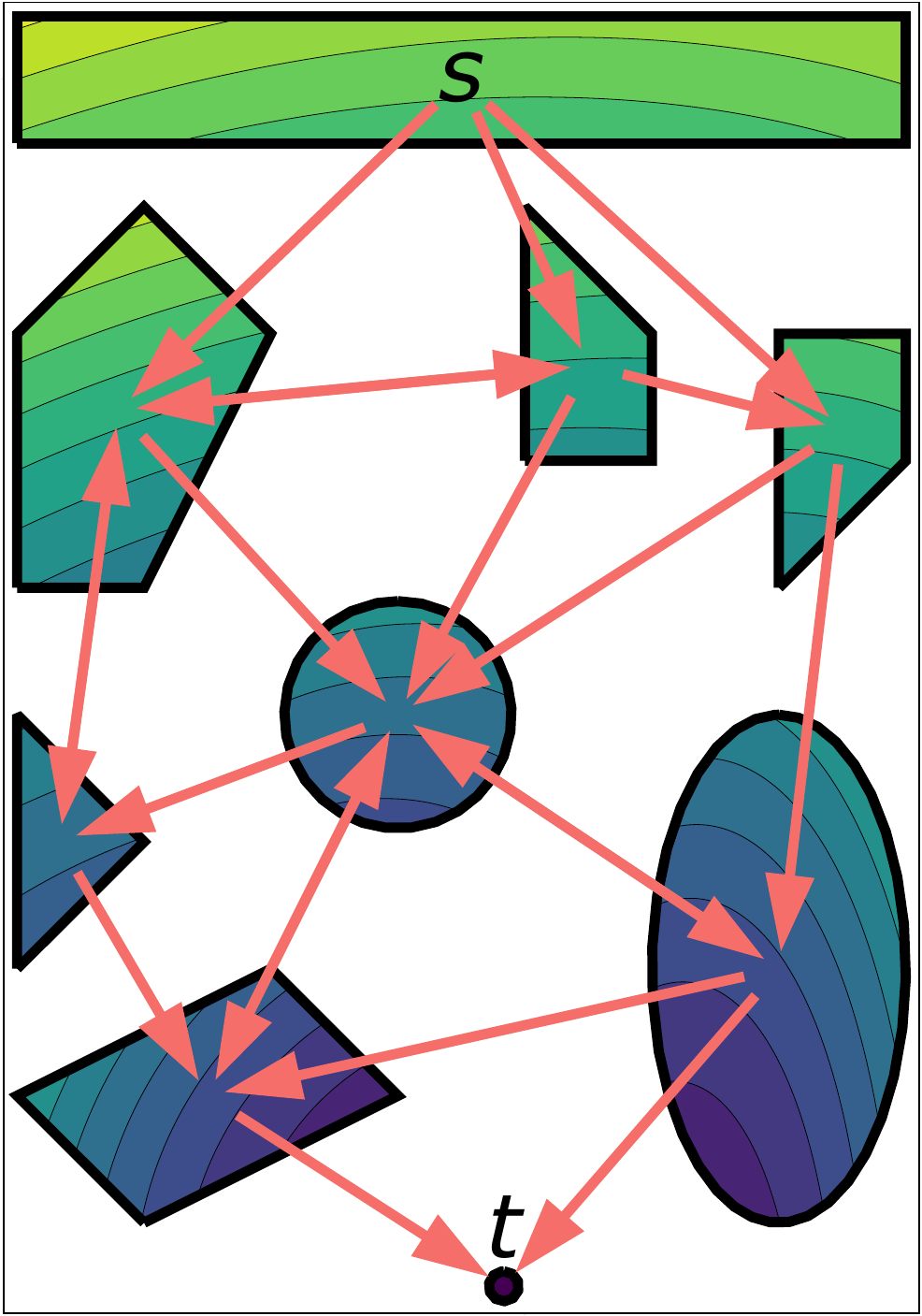}
        \caption{
            Offline: synthesize cost-to-go over the GCS.
            Contour plots are shown.
        }
        \label{sf-g1}
    \end{subfigure}
    \hspace{0.3cm}
    \begin{subfigure}[t]{0.71\textwidth}
        \begin{subfigure}[t]{0.325\textwidth}
            \centering
            \includegraphics[width=\textwidth]{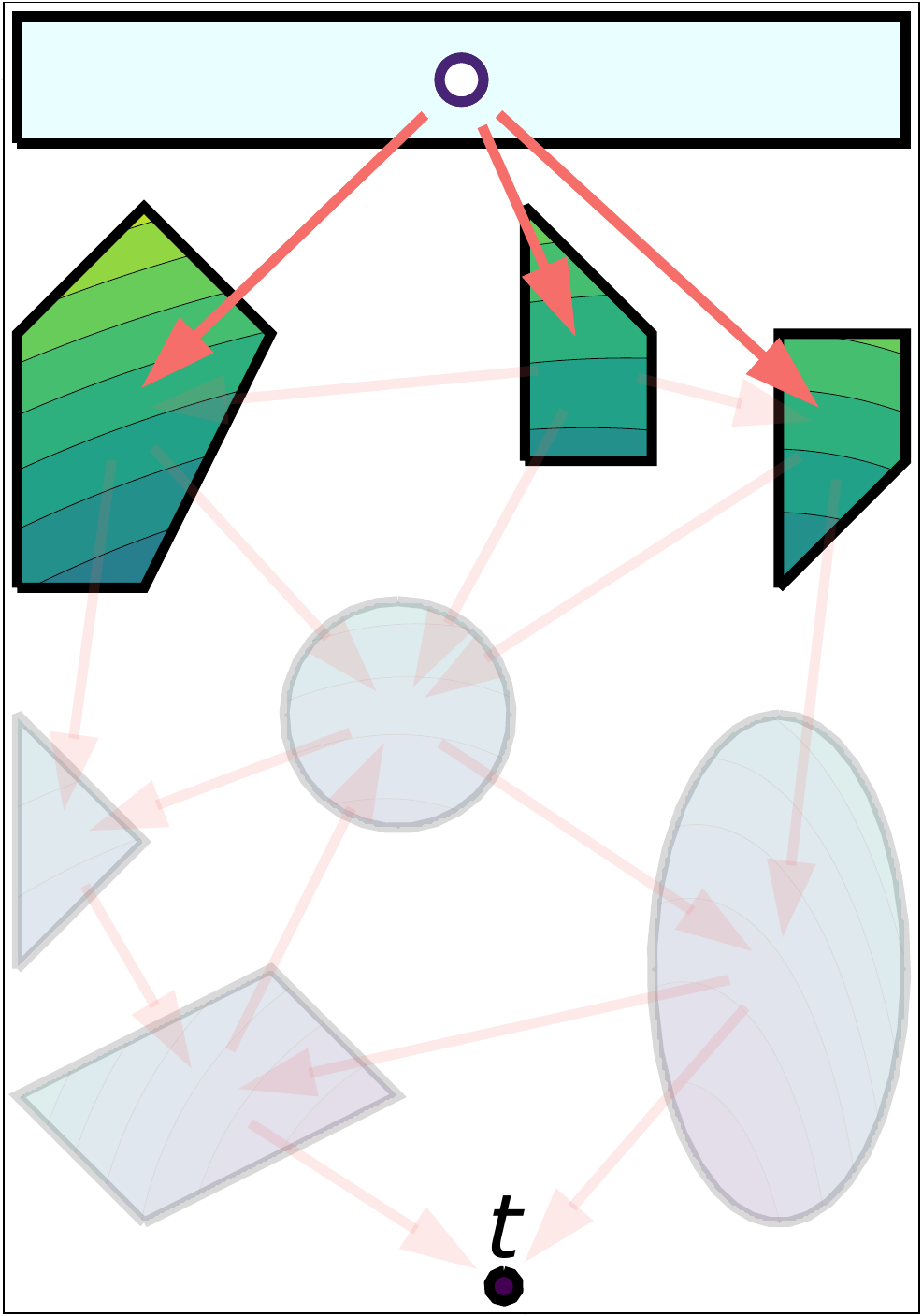}
        \end{subfigure}
        \begin{subfigure}[t]{0.325\textwidth}
            \centering\includegraphics[width=\textwidth]{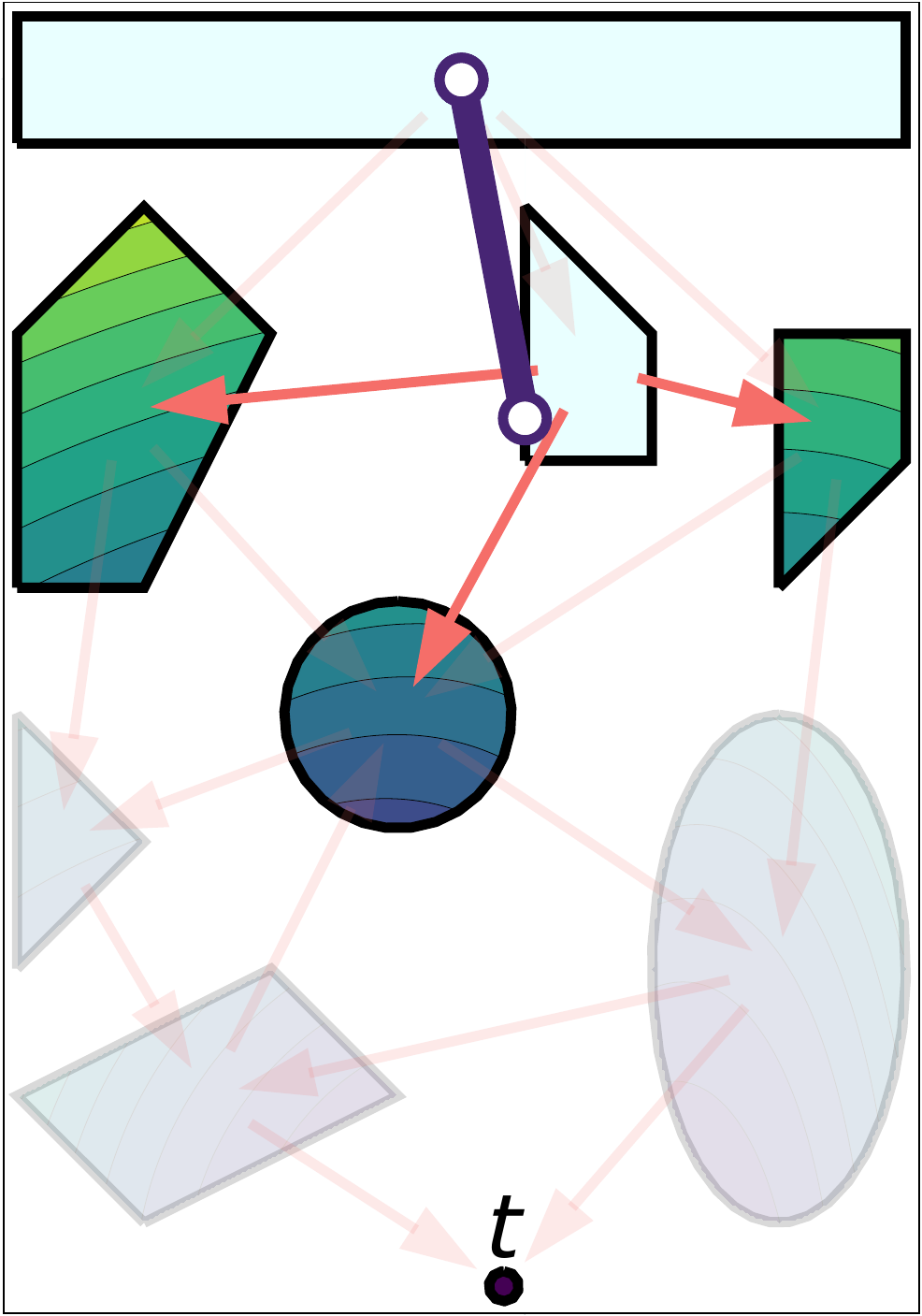}
        \end{subfigure}
        \begin{subfigure}[t]{0.325\textwidth}
            \centering\includegraphics[width=\textwidth]{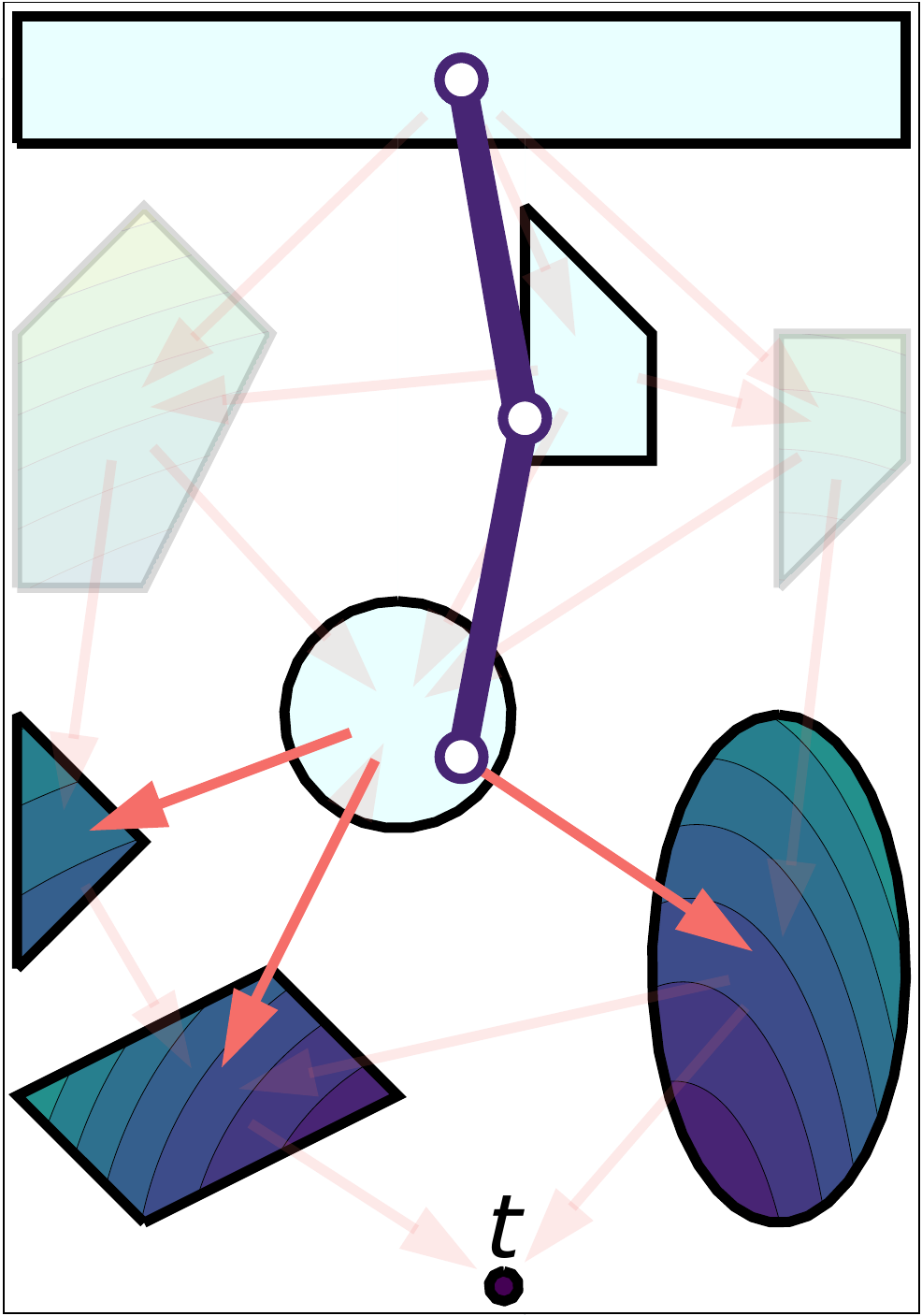}
        \end{subfigure}
        \caption{
            Online: at each iteration, we evaluate all $n$-step paths from the current vertex ($n\!=\!1$ shown) and greedily select the decision that minimizes the $n$-step lookahead cost-to-go.
            The first three iterations are shown, as the path is built incrementally.
            }
        \label{sf-g2}
    \end{subfigure}
    \caption{
    Illustration of our approach.
    The GCS instance is embedded in $\mathbb R^2$, with the source vertex at the top and the target vertex at the bottom. 
    The edges are shown as red arrows, and the edge length is the squared Euclidean distance.
    }
    \label{f-gcs-example}
\end{figure}

\subsection{Literature review}

\label{ss-related-work}
Graph search plays a central role in both modelling and solving a wide variety of planning problems in robotics. In this section we briefly connect our work to some notable examples in this literature.

A common approach to motion planning is to construct a graph where nodes correspond to collision-free configurations and edges correspond to collision-free motions. 
The most popular approaches based on this idea are the Rapidly exploring Random Trees (RRT)~\cite{lavalle1998rapidly}, Probabilistic Roadmap (PRM)~\cite{kavraki1996probabilistic}, and their many variants~\cite{kuffner2000rrt,bohlin2000path,jaillet2004prm,karaman2011sampling}. 
The GCS approach to motion planning is similar to the PRM one, but collision-free configurations are replaced with large collision-free sets~\cite{marcucci2023motion}.
GCS avoids two major drawbacks of planning with a PRM: the need to densely sample in high-dimensional spaces and post-process the motion plan to obtain a smooth trajectory.
However, generating these collision-free sets can be computationally challenging and expensive.
Furthermore, SPP in GCS queries can still be very expensive, motivating this current work.

The importance of the SPP has led to a breadth of literature on its solution, with Bellman's dynamic programming approach illustrating the central role of the cost-to-go function \cite{bellman1966dynamic,bertsekas2012dynamic}. 
Given the cost-to-go, a shortest path can be extracted using a simple greedy strategy: given a vertex $v$, the next vertex in the path is the one which minimizes the cost-to-go among all the neighbors of $v$.
This is captured by the famous Bellman's equation\cite{bellman1966dynamic}.

Multi-query SPP setting has also been thoroughly investigated. 
The All-Pairs Shortest Paths (APSP) is the problem of finding shortest paths between every pair of vertices in a discrete graph~\cite[Ch. 23]{cormen2022introduction}.
One method for solving this problem, from which we draw particular inspiration, first computes the cost-to-go function between every pair of vertices in the graph (also known as a \emph{distance oracle}). 
This cost-to-go is used to produce a successor along the shortest path between every pair of vertices, which is stored into the \textit{successor matrix}. 
The optimal paths are retrieved by sequentially querying this matrix.

Explicit solutions to the Bellman equation exist only in a handful of contexts. 
In the case of purely discrete graphs, a number of efficient methods exist~\cite{floyd1962algorithm,warshall1962theorem,johnson1977efficient}, where the cost-to-go function can be encoded using a simple matrix. 
Another notable example from control is Explicit Model Predictive Control (MPC) where the cost-to-go is a piecewise quadratic \cite{bemporad2002explicit}. 
However, even in these settings, storing the cost-to-go can be prohibitively expensive for large graphs, particularly in the APSP setting. 
In the purely discrete setting, the description of the APSP cost-to-go function grows quadratically in the size of the graph, while in the MPC setting it grows exponentially.

In most cases, solving the Bellman equation is known to be intractable. 
This has motivated a breadth of literature for computing approximations for the cost-to-go in various setting \cite{de2003linear,powell2007approximate,lasserre2008nonlinear,bertsekas2012dynamic,lewis2013reinforcement,wang2015approximate}.
Similarly, in this paper we seek a computationally tractable way to approximate the cost-to-go function to solve the APSP in GCS.
The generalization in the particular context of GCS is not straightforward and constitutes one of the contributions of this work.

\section{All-Pairs Shortest Paths in a Graph of Convex Sets}
\label{s-apsp-in-gcs}
We seek to efficiently precompute optimal solutions to the SPP in GCS between given sets of source and target conditions.
\cref{ss-apsp-in-graph} presents the classical APSP, which is the corresponding problem in an ordinary graph.
In \cref{ss-spp-in-gcs}, we describe the single-query SPP in GCS.
We then formulate the APSP in GCS in \cref{ss-apsp-in-gcs}, and outline our approximate solution method in \cref{ss-solution-overview}.

\subsection{All-Pairs Shortest Paths}
\label{ss-apsp-in-graph}

\paragraph{Graphs and paths.}
Let $G=(\mathcal V, \mathcal E)$ be a directed graph with vertex set $\mathcal V$ and edge set $\mathcal E$.
Given a source vertex $s$ and target vertex $t$, an $s\textsf{-}t$ {path} is a sequence of distinct vertices $p = (s\!=\!v_0, v_1, \ldots, v_K\!=\!t)$, where each consecutive pair of vertices is connected by an edge in $\mathcal E$ and no vertex is revisited. 
We define $\mathcal E_p = \{(v_0, v_1), \ldots, (v_{K-1}, v_K)\}$ as the set of edges traversed by the path $p$, and denote the set of all $s\textsf{-}t$ paths in $G$ as $\mathcal P_{s,t}$.

\paragraph{Shortest Path Problem (SPP).}

Let us associate with every edge $e\in\mathcal E$ a non-negative edge cost $c_e\in\mathbb R_+$.
A shortest path $p$ between the vertices $s$ and $t$ minimizes the sum of the edge costs along the path:
$$
\underset{p}{\min} \quad \sum_{e \in \mathcal E_p} c_e \quad  \text{s.t.} \quad p \in \mathcal P_{s,t}. 
$$
The optimal value of this program is called the \textit{cost-to-go} between $s$ and $t$, and is denoted by $J^*_{s,t}$.
\textit{The principle of optimality}~\cite{bellman1966dynamic} holds in this context, stating that every subpath of a shortest path is itself a shortest path.
This forms the foundation for many efficient solution algorithms to this problem.

\paragraph{All-Pairs Shortest Paths (APSP).} 
The APSP is the multi-query generalization of the SPP, where we seek a shortest path between all pairs of vertices in a graph.
Efficient solutions to the APSP leverage the principle of optimality.
Instead of computing the full path for each pair of vertices, it suffices to compute only the immediate successor along this path.
The full path can thus be attained incrementally, one vertex at the time.

This solution to the APSP can be implicitly encoded via the cost-to-go function $J^*_{v,t}$ for every pair of vertices $v$ and $t$, computed via dynamic programming~\cite[Ch. 23]{cormen2022introduction}~\cite{floyd1962algorithm,warshall1962theorem,johnson1977efficient}.
The successor is then computed by greedily picking a vertex that minimizes the one-step lookahead with respect to the cost-to-go:
\begin{subequations}
\label{e-successor-policy}
\begin{align}
\pi(v, t) \quad = \quad \underset{w}{\arg\min} \quad &  c_e + J^*_{w,t} \label{e-succp-a} \\
\text{s.t.} \quad & e = (v, w)\in\mathcal E. \label{e-succp-b}
\end{align}
\end{subequations}
The solution $\pi$ is a decision policy that, given the current and target vertices $v$ and $t$, selects the next vertex on the shortest $v\textsf{-}t$ path.
We refer to $\pi$ as the \textit{successor policy}.

\subsection{Shortest-Path Problem in a Graph of Convex Sets}
\label{ss-spp-in-gcs}

\paragraph{Graph of Convex Sets.}
A GCS is a directed graph $G = (\mathcal V, \mathcal E)$, where each vertex $v\in\mathcal V$ is paired with a bounded convex set $\mathcal X_v$ and a continuous variable $x_v\in\mathcal X_v$.
Each edge $e = (v,w)\in\mathcal E$ is then paired with a convex set $\mathcal X_e \subseteq \mathcal{X}_v \times \mathcal{X}_w$ and a convex non-negative edge length function $l_e: \mathcal{X}_e \rightarrow \mathbb{R}_+$, such that the adjacent vertex variables satisfy the constraint $(x_v,x_w)\in\mathcal X_e$, while minimizing the length $l_e(x_v,x_w)$~\cite{marcucci2024graphs}.

\paragraph{The Shortest Path Problem in a Graph of Convex Sets.}
The SPP in GCS between point $\bar x_s\in\mathcal X_s$ of vertex $s$ and $\bar x_t\in\mathcal X_t$ of vertex $t$ is defined as follows: 
\begin{subequations}\label{e-gcs-spp}
\begin{align}
\qquad\qquad \underset{p, \;\{x_v\}_{v \in p}}{\min} \quad & \sum_{e = (v,w)\in \mathcal E_p} l_{e}(x_v,\, x_w) &&\label{e-spp-a} \\
\text{s.t.} \quad & p \in\mathcal P_{s,t}, &&\label{e-spp-b}\\
& x_s = \bar x_s, \quad x_t = \bar x_t, &&\label{e-spp-c} \\
& x_{v}\in\mathcal X_{v}, &&\forall v\in p, \label{e-spp-d} \\
& (x_v,x_w) \in \mathcal X_{e}, &&\forall e=(v,w)\in \mathcal E_p. \label{e-spp-e}
\end{align}
\end{subequations}
Similar to the classical SPP, the SPP in GCS searches for an $s\textsf{-}t$ path $p=(v_0, v_1, \ldots, v_K)$ though a graph, which is a sequence of distinct vertices.
In addition to that, it also searches for a sequence of corresponding vertex variables ${(\bar x_s\!=\!x_{v_0}, x_{v_1}, \ldots, x_{v_K}\!=\!\bar x_t)}$, referred to as a \textit{trajectory}.
This trajectory satisfies the vertex and edge constraints~\eqref{e-spp-c},~\eqref{e-spp-d},~\eqref{e-spp-e}, while minimizing the edge costs~\eqref{e-spp-a}.
The optimal solution to~\eqref{e-gcs-spp} is thus a tuple (path and trajectory).
We denote the optimal value of~\eqref{e-gcs-spp} as $J^*_{s,t}(\bar x_s,\bar x_t)$ and refer to it as the \textit{cost-to-go} from point $\bar x_s$ of vertex $s$ to point $\bar x_t$ of vertex $t$.

Unlike the classical SPP, the SPP in GCS is NP-hard~\cite[Section~9.2]{marcucci2024graphs}, and thus unlikely to have a polynomial-time solution. 
However, it can be reformulated as a Mixed-Integer Convex Program (MICP) with a strong convex relaxation~\cite{marcucci2024shortest}: using a rounding strategy from~\cite{marcucci2023motion}, this relaxation often yields near-optimal solutions in practice.

For the classical SPP, the principle of optimality holds and the optimal policy is independent of past decisions, which simplifies the problem and enables many efficient solution algorithms. 
As demonstrated in the following example, these properties break down in the SPP in GCS.

\begin{example}
\label{ex-optimal-policy}
Consider the GCS in \cref{f-illustrative}, which is embedded in $\mathbb R^2$.
This GCS has four vertices $\mathcal V = \{s,v,w,t\}$, where the convex sets $\mathcal X_s,\mathcal X_v,\mathcal X_t$ are points, and the convex set $\mathcal X_w$ is a segment.
Every vertex is connected to every other vertex with an edge, and the edge lengths $l_e$ are the squared Euclidean distance (e.g., $l_{(v,w)} = ||x_v-x_w||_2^2$).

Due to the constraint that vertices cannot be revisited, the optimal policy is a function of the set of previously visited vertices.
This is demonstrated in \cref{sf-illustrative-1}, where we plot the optimal $s\textsf{-}t$ path in orange and the optimal $v\textsf{-}t$ path in blue.
The optimal decision at vertex $v$ depends on previously visited vertices: if $w$ was visited before, the optimal decision is to go to $t$ (orange), otherwise the optimal decision is to go to $w$ (blue).

\begin{figure}[t!]
    \centering
    \begin{subfigure}[t]{0.49\textwidth} 
        \centering        
        \includegraphics[width=\textwidth]{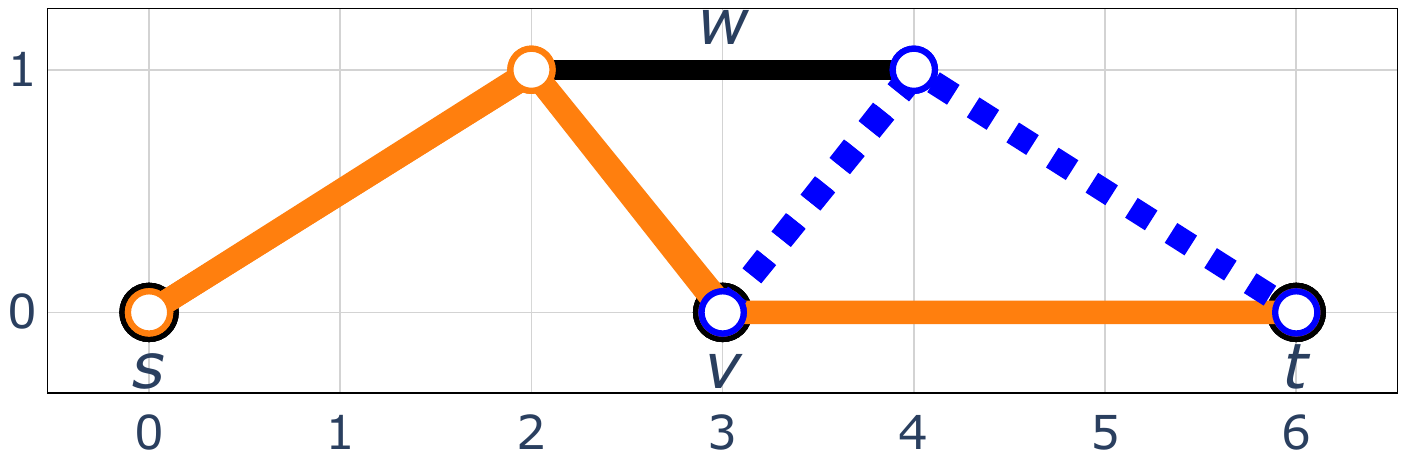}
        \caption{
        The optimal policy at vertex $v$ depends on  previous decisions.
        If $w$ has been visited already, the optimal decision is to go to $t$ (orange), otherwise it is to go to $w$ (blue).
        }
        \label{sf-illustrative-1}
    \end{subfigure}\hfill
    \begin{subfigure}[t]{0.49\textwidth}
        \centering
        \includegraphics[width=\textwidth]{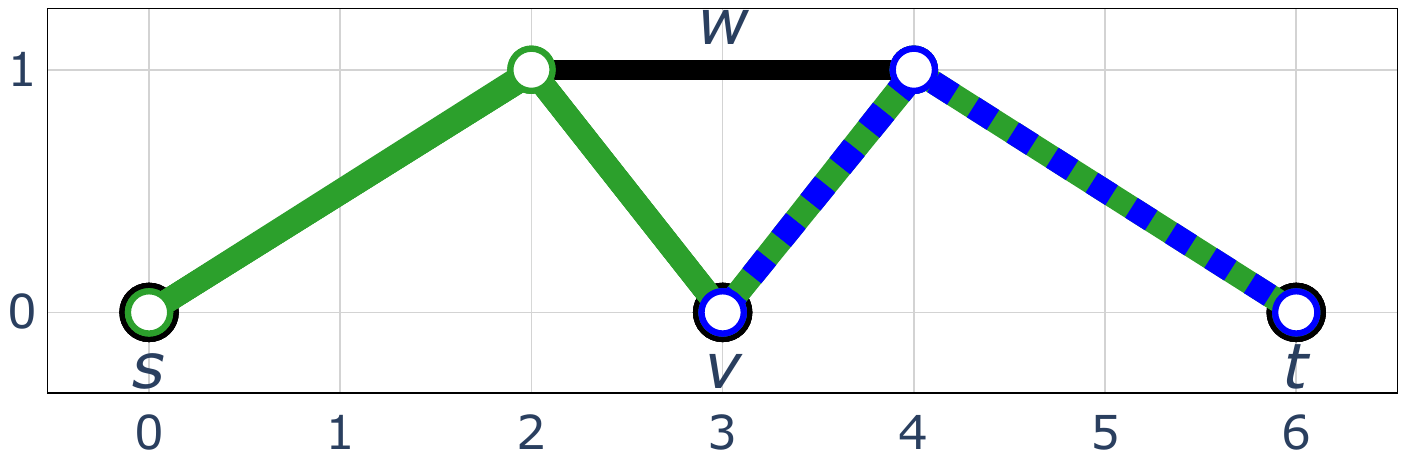}
        \caption{
        If we allow vertex revisits, the optimal policy is independent of past decisions.
        Shown are the optimal $s\textsf{-}t$  (green) and $v\textsf{-}t$ (blue) solutions to the relaxed problem.
        }
        \label{sf-illustrative-2}
    \end{subfigure}
    \caption{
    The two-dimensional GCS from \cref{ex-optimal-policy}.
    The convex sets paired with $s,v,t$ are points and the one paired with $w$ is a segment. 
    The GCS is fully connected, and the edge lengths are the squared Euclidean distance. 
    }
    \label{f-illustrative}
\end{figure}

Observe also that the principle of optimality does not hold for this problem: the $v\textsf{-}t$ subpath of the optimal $s\textsf{-}t$ path (orange) is not the optimal $v\textsf{-}t$ path (blue).
We cannot substitute the optimal $v\textsf{-}t$ path (blue) in place of the original $v\textsf{-}t$ subpath, since the resulting vertex sequence $(s,w,v,w,t)$ (\cref{sf-illustrative-2}, green) visits vertex $w$ twice, and is therefore not a path.

The constraint that vertices cannot be revisited is a key challenge of the SPP in GCS.
This is unlike the classical SPP with non-negative edge lengths, where this constraint does not increase problem complexity.
It can be shown that if we allow vertex revisits, then the principle of optimality holds, and the optimal decision policy is independent of past decisions.
This is illustrated in \cref{sf-illustrative-2}, where the optimal $s\textsf{-}t$ and $v\textsf{-}t$ solutions to the relaxed problem are shown in green and blue respectively.

\end{example}

\subsection{All-Pairs Shortest Paths in a Graph of Convex Sets}
\label{ss-apsp-in-gcs}

The APSP in GCS extends the classical APSP in a natural way.
We are given a set of source vertices $\mathcal S\subset \mathcal V$ and a set of target vertices $\mathcal T\subset \mathcal V$.
The goal is to solve the SPP in GCS between every pair of source and target points $\bar x_s \in \mathcal X_s$ and $\bar x_t \in \mathcal X_t$, and every pair of source and target vertices $s \in \mathcal S$ and $t \in \mathcal T$.
Since the SPP in GCS is NP-hard, the APSP in GCS is at least NP-hard as well. 

\subsection{Method outline}
\label{ss-solution-overview}

Our approach generalizes the solution to the classical APSP outlined in \cref{ss-apsp-in-graph}.
We proceed in two phases.
Offline, we compute a coarse quadratic lower bound on the cost-to-go between relevant pairs of GCS vertices.
Then online, we extend the greedy policy~\eqref{e-successor-policy} to the GCS setting.
At runtime, we rollout this policy to obtain the solution path incrementally, one vertex at a time.

Unlike the classical APSP, a greedy policy with the cost-to-go $J^*_{s,t}$ is not an optimal policy for the APSP in GCS.
This is because the optimal policy for paths in GCS depends on previously visited vertices, which is not captured by the cost-to-go  $J^*_{s,t}(x_s,x_t)$.
Thus, our approach is bound to yield approximate solutions, further limited by the coarseness of quadratic cost-to-go lower bounds.

To incorporate the challenging ``no-vertex-revisit constraint'' into the cost-to-go function, we relax this constraint by introducing penalties for vertex revisits.
These penalties are applied to the edge lengths, producing a biased cost-to-go lower bound that discourages revisits.
When rolling out a greedy policy online, we also explicitly prohibit vertex revisits.
To mitigate the coarseness of quadratic cost-to-go lower bounds and better approximate the optimal policy, we employ a multi-step lookahead generalization of the greedy policy~\eqref{e-successor-policy}, optimizing over $n$-step decision sequences at each iteration.

\section{Offline phase: synthesis of cost-to-go lower bounds}
\label{s-cost-to-go-function-synthesis}

In \cref{ss-cost-to-go-synthesis}, we present the optimization problem that produces cost-to-go lower bounds for the APSP in GCS.
This program is infinite-dimensional, so in \cref{ss-sdp} we present a tractable numerical approximation for it.

For clarity of presentation, we make some simplifying assumptions.
First, we assume that we have just one source vertex and one target vertex, i.e., $\mathcal S=\{s\}$ and $\mathcal T=\{t\}$. 
Second, we assume that the set $\mathcal X_t$ corresponding to the target vertex $t$ is a singleton: $\mathcal X_t = \{ x_t\}$.
Since the target vertex $t$ and point $x_t$ are fixed, we also simplify the notation and refer to $J_{v,t}^*(x_v, x_t)$ as $J^*_v(x_v)$.
The extensions of our method when these assumptions do not hold are straightforward and discussed in \cref{a-further-generalization}.

\subsection{Cost-to-go lower bounds via infinite-dimensional LP}
\label{ss-cost-to-go-synthesis}

The cost-to-go lower bounds are synthesized with the following optimization problem:
\begin{subequations}
\label{e-path-cost-to-go-synthesis}
\begin{align}
\max_{\{J_v,  h_v\}_{v \in \mathcal V}}  \quad &  
\int_{\mathcal X_s} J_s(x) d\phi_s(x) && \label{e-p-objective} \\
\text{s.t.} \quad & J_v: \mathcal X_v\rightarrow\mathbb R, &&\forall v\in \mathcal V, \label{e-p-value-def} \\
& h_w \geq 0  &&\forall w\in\mathcal V,\label{e-p-penalty-def}\\
&  J_v(x_v) \leq l_e(x_v, x_w) + h_w + J_w(x_w),  && \forall e=(v,w)\in\mathcal E,\! \label{e-p-lower-bound}\\
&&& \forall (x_v,x_w)\in\mathcal X_e,\! \nonumber\\
& J_t(x_t) = - \sum_{w\in\mathcal V} h_w. \label{e-p-target-value}
\end{align}
\end{subequations}
We now give a detailed line-by-line explanation of this program, and prove the validity of the lower bounds it produces in \cref{theorem-prog-lower-bound} below.

In constraint~\eqref{e-p-value-def}, we associate with every vertex $v\in\mathcal V$ a (possibly non-convex) function~$J_v$ defined over the set $\mathcal X_v$.
These functions serve as the lower bounds on the cost-to-go $J_v^*$, as will be shown later.
We emphasize that we are searching over the space of functions $J_v$, not over the individual points $x_v$.

In the objective function \eqref{e-p-objective}, $\phi_s$ is a probability distribution of anticipated source conditions over the set $\mathcal X_s$.
Thus, the integral in \eqref{e-p-objective} maximizes the weighted average of $J_s$ over the source set $\mathcal X_s$, effectively ``pushing up'' on the cost-to-go lower bound at the source vertex.

In \eqref{e-p-penalty-def}, we introduce a non-negative penalty $h_w$ for every vertex $w\in\mathcal V$.
This penalty is meant to discourage revisits to vertex $w$, which is a way to relax the constraint that a path must not visit any vertex more than once.

To implement the penalty $h_w$, we increment the edge length $l_e$ for every edge $e\in\mathcal E$ that enters vertex $w$.
This is formalized in \eqref{e-p-lower-bound}, which states that for every edge $e = (v,w)$ and a feasible transition $(x_v,x_w)\in\mathcal X_e$, the value $J_v(x_v)$ is a lower bound on the sum of the penalty-incremented edge length $l_e(x_v,x_w) + h_w$ and the subsequent cost-to-go lower bound $J_w(x_w)$.
As written, the non-negative penalty $h_w$ increases the cost of the edges leading into vertex $w$, thereby discouraging visits to $w$.
However, since our goal is to only discourage vertex revisits, we need to waive the penalty $h_w$ once.
This is achieved by setting the cost-to-go lower bound $J_t(x_t)$ to $-\sum_{w\in\mathcal V} h_w$ in constraint \eqref{e-p-target-value}.
Upon reaching the target vertex, we subtract the sum of all vertex penalties from the cost-to-go lower bound, effectively waiving the penalties once per vertex.
We now show that these constraints produce lower bounds on the cost-to-go function.

\begin{lemma}
\label{theorem-prog-lower-bound}
Let $J_v$ and $h_v$ for $v\in \mathcal V$ be a feasible solution of problem~\eqref{e-path-cost-to-go-synthesis}.
Then
$$
J_v(x_v)\leq J_v^*(x_v) \text{ for all } v\in\mathcal V.
$$
\end{lemma}
\begin{proof}
Consider the optimal solution to program~\eqref{e-path-cost-to-go-synthesis}, and let $v$ be some vertex.
Let $p$ be an optimal path from a point $x_v\in\mathcal X_v$ to the target point $x_t$.
Since $p$ is a path, it contains no repeated vertices.
Adding the constraint~\eqref{e-p-lower-bound} along the edges $\mathcal E_p$ of this optimal path, we have:
\begin{align}
\label{e-lower-bound-penalties}
J_v(x_v) \leq \sum_{e=(u,w)\in\mathcal E_p} l_e(x_u,x_w)  \sum_{w\in p} h_w + J_t(x_t),
\end{align}
where $x_u,x_w$ are the vertex variables of the optimal trajectory corresponding to $p$.
Constraint~\eqref{e-p-target-value} states that $J_t(x_t)=- \sum_{w\in\mathcal V} h_w$, while the sum of the edge lengths $l_e(x_u,x_w)$ along the optimal path $p$ is by definition the cost-to-go $J_v^*(x_v)$.
Substituting and rearranging terms, we obtain:
\begin{align}
\label{e-lower-bound-penalties-2}
J_v(x_v)+ \sum_{w\notin p} h_w \;\leq\;  J_v^*(x_v).
\end{align}
Since the penalties $h_w$ are non-negative by~\eqref{e-p-penalty-def}, the conclusion follows.~\qed
\end{proof}
By maximizing the weighted average of $J_s$ in the objective function~\eqref{e-p-objective}, the program~\eqref{e-path-cost-to-go-synthesis} seeks the best possible lower bound $J_s$ on the cost-to-go $J_s^*$, up to the relaxation gap introduced by the vertex penalties.
This gap is clear from~\eqref{e-lower-bound-penalties-2}: for $x_s\in\mathcal X_s$, the sum of the off-the-optimal-path penalty terms $\sum_{w\notin p} h_w$ need not to be zero,
so $J_s(x_s)$ need not be a tight lower bound on $J_s^*(x_s)$.
In other words, recall that, upon reaching the target, we waive the penalties $h_w$ for every vertex $w\in\mathcal V$.
As a result, we do not just waive the first-time penalties on vertices along the optimal path $p$, we also waive the off-the-path penalties $\sum_{w\notin p} h_w$, which were never accrued in the first place.
Waiving these off-the-path penalties introduces the gap between $J_s$ and $J_s^*$.
\paragraph{Example 1, continued.}
Consider the solution to program~\eqref{e-path-cost-to-go-synthesis} for the GCS instance in~\cref{f-illustrative}.
Setting the revisit penalty $h_w = 0$ results in $J_s=14$, which is the cost of the vertex sequence that visits $w$ twice (green in \cref{sf-illustrative-2}).
By jointly optimizing over the penalties and the cost-to-go lower bounds, program~\eqref{e-path-cost-to-go-synthesis} selects the penalty $h_w=2$.
Revisiting vertex $w$ is no longer advantageous, 
and the cost of the shortest $s\textsf{-}t$ path (orange in~\cref{sf-illustrative-1}) is achieved: $J_s = J_s^* = 16$.

\vspace{\baselineskip}

We note that program \eqref{e-path-cost-to-go-synthesis} naturally generalizes the cost-to-go synthesis LP for the classical SPP~\cite{de2003linear}. 
When each convex set $\mathcal X_v$ is a singleton, the problem reduces to the classical SPP, where functions $J_v$ are defined at single points and represented by a single decision variable. 
Setting vertex penalties $h_w=0$ recovers the standard cost-to-go synthesis LP for the classical SPP:
\begin{equation}
\begin{aligned}
\label{discrete-cost-to-go-search}
\max_{\{J_v\}_{v\in\mathcal V}}  \quad &  J_s \\
\text{s.t.} \quad &  J_v \leq l_e + J_w,   \qquad&& \forall e=(v,w)\in\mathcal E,\\
& J_t = 0.
\end{aligned}
\end{equation}
Compared to the purely discrete setting of~\eqref{discrete-cost-to-go-search}, optimization program~\eqref{e-path-cost-to-go-synthesis} is also an LP; however, it searches over the space of functions and is therefore infinite-dimensional. 
Next, we develop a tractable finite-dimensional approximation to~\eqref{e-path-cost-to-go-synthesis} that is conducive to numerical methods.

\subsection{Numerical approximation via semidefinite programming}
\label{ss-sdp}

We now produce an approximate solution to the cost-to-go synthesis program~\eqref{e-path-cost-to-go-synthesis}.
We restrict each function $J_{v}$ to be convex quadratic, which allows us to cast~\eqref{e-path-cost-to-go-synthesis} as a tractable Semidefinite Program (SDP). 
SDPs are mathematical programs where the objective function is linear and the constraints are either linear or linear matrix inequalities (LMIs).
To help with the presentation, we first state without proof three well-known facts.
 
\begin{lemma}[{e.g., \cite[App.~A.1]{blekherman2012semidefinite}}] \label{lemma-quadratic}
A quadratic function $f : \mathbb R^n \rightarrow \mathbb R$ is non-negative if and only if it is representable as a Positive-Semidefinite (PSD) quadratic form:
\begin{align} 
f(x) = \begin{bmatrix}1\\x\end{bmatrix}^\top\! Q \begin{bmatrix}1\\x\end{bmatrix} \; \text{ for some } \; Q\succeq 0.\nonumber
\end{align}
\end{lemma}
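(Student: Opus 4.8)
The statement to prove is \cref{lemma-quadratic}: a quadratic function $f:\mathbb R^n \to \mathbb R$ is non-negative everywhere if and only if it admits a representation as a PSD quadratic form in the lifted variable $[1, x^\top]^\top$. This is a standard fact, and the plan is to prove both directions of the equivalence by exploiting the homogenization trick that lifts an inhomogeneous quadratic to a homogeneous one.

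For the easy direction, I would assume $f(x) = [1,x^\top] Q [1,x^\top]^\top$ with $Q \succeq 0$ and show $f \geq 0$. This is immediate: for any $x \in \mathbb R^n$, the vector $y = [1, x^\top]^\top \in \mathbb R^{n+1}$ is a legitimate vector, and $Q \succeq 0$ means $y^\top Q y \geq 0$ for all $y$, hence in particular for those $y$ whose first coordinate is $1$. So $f(x) \geq 0$ for all $x$.

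For the harder direction, I would assume $f \geq 0$ everywhere and construct the PSD matrix $Q$. First I write $f$ in its natural coordinates as $f(x) = x^\top A x + 2 b^\top x + c$ with $A$ symmetric, and observe that the candidate matrix is
\begin{equation}
Q = \begin{bmatrix} c & b^\top \\ b & A \end{bmatrix},\nonumber
\end{equation}
which by construction satisfies $[1,x^\top] Q [1,x^\top]^\top = f(x)$. The task reduces to showing $Q \succeq 0$, i.e. $y^\top Q y \geq 0$ for every $y = [y_0, z^\top]^\top \in \mathbb R^{n+1}$, given only that $f(x) \geq 0$ for all $x$. The main obstacle is handling the vectors $y$ whose first coordinate $y_0$ is zero, since these do not correspond directly to evaluating $f$ at any point. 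The trick is homogeneity: for $y_0 \neq 0$, I can write $y^\top Q y = y_0^2\, f(z/y_0) \geq 0$ by scaling. For the remaining case $y_0 = 0$, I would argue by a continuity/limiting argument: the set $\{y_0 \neq 0\}$ is dense in $\mathbb R^{n+1}$, and $y \mapsto y^\top Q y$ is continuous, so non-negativity on the dense set extends to all of $\mathbb R^{n+1}$. Alternatively, one can note that $z^\top A z = \lim_{t\to\infty} f(tz)/t^2 \geq 0$ shows the $y_0=0$ block is handled automatically. Either way, this limiting step is the only subtle point; the rest is a direct algebraic identification. Since the paper cites this as a well-known fact, I would keep the argument brief and lean on the homogenization-plus-continuity observation rather than belaboring the quadratic algebra.
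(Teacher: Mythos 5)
Your proof is correct. Note that the paper itself gives no proof of \cref{lemma-quadratic} at all --- it is explicitly stated ``without proof'' as a well-known fact, with a citation to \cite[App.~A.1]{blekherman2012semidefinite} --- so your argument fills in what the paper delegates to the reference, and it does so with the standard homogenization argument: the easy direction by restricting the PSD form to vectors with first coordinate $1$, and the hard direction by identifying $Q$ from the coefficients of $f$, using $y^\top Q y = y_0^2 f(z/y_0)$ for $y_0 \neq 0$, and closing the $y_0 = 0$ case by continuity (or the limit $z^\top A z = \lim_{t\to\infty} f(tz)/t^2 \geq 0$). Both steps are airtight; in particular the density/limiting argument correctly handles the only subtle point, so there is no gap.
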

\begin{lemma}[{\cite[Section 3.2.4]{blekherman2012semidefinite}}] \label{lemma-non-negative-on-set}
Let $\mathcal{X} = \{x \in \mathbb{R}^n\;|\; g_i(x)\geq 0, \; \forall i=1,\dots,m\}$.
The function $f : \mathbb R^n \rightarrow \mathbb R$ is non-negative on the set $\mathcal X$ if there exists $\lambda\in\mathbb R^m_+,$ such that $f(x) - \sum_{i=0}^m\lambda_{i}g_{i}(x)$ is non-negative for every $x\in\mathbb R^n$.
\end{lemma}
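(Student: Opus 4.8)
The plan is to prove this as a direct, pointwise argument, recognizing the statement as the elementary (sufficient) half of the S-procedure, or equivalently a Lagrangian-relaxation bound. The appeal of this lemma is that it requires no convexity, smoothness, or constraint qualification on the $g_i$, so the proof should avoid invoking any such structure.

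First I would fix an arbitrary point $x\in\mathcal X$ together with a multiplier vector $\lambda\in\mathbb R^m_+$ witnessing the hypothesis, that is, satisfying $f(y)-\sum_i\lambda_i g_i(y)\ge 0$ for every $y\in\mathbb R^n$. The key observation is that this global inequality holds in particular at the chosen $y=x$, which gives
\[
f(x)\;\ge\;\sum_i \lambda_i\, g_i(x).
\]
Next I would bound the right-hand side from below using membership in $\mathcal X$. Since $x\in\mathcal X$, every defining constraint is satisfied, so $g_i(x)\ge 0$; and by assumption each multiplier obeys $\lambda_i\ge 0$. Hence each summand $\lambda_i g_i(x)$ is a product of two non-negative reals and is therefore non-negative, so $\sum_i\lambda_i g_i(x)\ge 0$. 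Chaining the two inequalities yields $f(x)\ge 0$, and since $x\in\mathcal X$ was arbitrary, $f$ is non-negative on all of $\mathcal X$, as claimed.

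I do not anticipate a genuine obstacle: the argument is a two-line chain of inequalities, and the only point requiring care is that the claim is deliberately one-directional. The condition is \emph{sufficient} but not necessary; recovering global non-negativity of $f-\sum_i\lambda_i g_i$ from non-negativity of $f$ on $\mathcal X$ is precisely where a Lagrangian relaxation can be lossy, and the converse can fail unless one adds structure (for instance strict feasibility with quadratic $f,g_i$ and a single constraint, as in the classical S-lemma). This is exactly why the lemma is stated as an implication rather than an equivalence, and why it is all that the subsequent development needs: any $\lambda\ge 0$ certifying global non-negativity of $f-\sum_i\lambda_i g_i$ — a condition that is itself an LMI via \cref{lemma-quadratic} when $f$ and the $g_i$ are quadratic — is a valid certificate that $f\ge 0$ on $\mathcal X$, which is what renders the cost-to-go constraints \eqref{e-p-lower-bound} amenable to an SDP encoding.
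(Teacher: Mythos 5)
Your proof is correct: fixing $x\in\mathcal X$, evaluating the global inequality at that point, and using $\lambda_i\ge 0$ together with $g_i(x)\ge 0$ to drop the sum is exactly the standard Lagrangian/S-procedure sufficiency argument. The paper itself states this lemma \emph{without proof} as a well-known fact (citing Blekherman et al.), so there is no internal proof to diverge from; your argument is the one the cited reference employs, and your remark that the implication is deliberately one-directional (the converse requires additional structure, as in the S-lemma) is also accurate. One trivial note: the statement's summation index $\sum_{i=0}^{m}$ is a typo for $\sum_{i=1}^{m}$, which your proof implicitly corrects by summing over the $m$ constraints.
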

\begin{corollary}
\label{lemma-verify-with-quadratic}
Suppose that in \cref{lemma-non-negative-on-set}, the function $f$ is quadratic, and all $g_i$ functions are affine or convex quadratic.
Then we can apply \cref{lemma-quadratic} to verify \cref{lemma-non-negative-on-set} via an LMI, i.e., we can verify if $f$ is non-negative over $\mathcal X$ by searching for a PSD matrix in an affine subspace.
\end{corollary}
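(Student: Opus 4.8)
The plan is to show that the sufficient condition of \cref{lemma-non-negative-on-set} can be rewritten as a search for a PSD matrix subject to affine constraints. Recall that \cref{lemma-non-negative-on-set} certifies non-negativity of $f$ on $\mathcal X$ by exhibiting multipliers $\lambda\in\mathbb R^m_+$ for which the function $p_\lambda(x) := f(x) - \sum_{i=1}^m \lambda_i g_i(x)$ is non-negative on all of $\mathbb R^n$. My approach is to first argue that $p_\lambda$ is itself a quadratic function of $x$, then apply \cref{lemma-quadratic} to encode ``$p_\lambda \geq 0$ everywhere'' as a PSD condition, and finally observe that the resulting condition is affine in the unknowns.

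First I would establish that $p_\lambda$ is quadratic in $x$. By hypothesis $f$ is quadratic, and each $g_i$ is either affine or convex quadratic; in both cases $g_i$ is a polynomial in $x$ of degree at most two. Hence each $\lambda_i g_i$ has degree at most two, and so does their sum and the difference $p_\lambda = f - \sum_i \lambda_i g_i$. Thus $p_\lambda$ is a quadratic function of $x$, and \cref{lemma-quadratic} is applicable to it. This degree accounting, rather than convexity per se, is the property of the $g_i$ that the corollary actually exploits.

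Next I would apply \cref{lemma-quadratic} to $p_\lambda$: it is non-negative on $\mathbb R^n$ if and only if there exists a matrix $Q\succeq 0$ such that
$$
p_\lambda(x) = \begin{bmatrix}1\\x\end{bmatrix}^\top Q \begin{bmatrix}1\\x\end{bmatrix} \quad \text{for all } x\in\mathbb R^n.
$$
Equating the two sides as polynomials and matching the coefficients of the monomials $1$, $x_i$, and $x_i x_j$ yields a finite system of equations linking the entries of $Q$ to the multipliers $\lambda$ and the (fixed) coefficients of $f$ and the $g_i$. Since the coefficients of $p_\lambda$ are affine in $\lambda$, and the quadratic-form coefficients are linear in the entries of $Q$, every such matching equation is affine in the pair $(\lambda, Q)$.

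Putting this together, verifying the hypothesis of \cref{lemma-non-negative-on-set} amounts to finding $\lambda\in\mathbb R^m$ and a symmetric matrix $Q$ satisfying the affine coefficient-matching equations together with $\lambda\geq 0$ and $Q\succeq 0$. This is precisely a feasibility problem that searches for a PSD matrix within an affine subspace, i.e., an LMI, which establishes the claim. The only point requiring care is the bookkeeping of the previous paragraph: one must check that no monomials of degree higher than two can appear, so that the coefficient-matching is exhaustive and the encoding exact. This is guaranteed exactly by the affine-or-convex-quadratic hypothesis on the $g_i$, and is the crux of why the corollary holds.
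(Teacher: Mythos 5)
Your proposal is correct and matches the argument the paper intends: the corollary is exactly the observation that, since $f - \sum_i \lambda_i g_i$ remains quadratic, \cref{lemma-quadratic} turns the certificate of \cref{lemma-non-negative-on-set} into coefficient-matching conditions that are affine in $(\lambda, Q)$, together with $\lambda \geq 0$ and $Q \succeq 0$ — an LMI feasibility problem. Your added remark that only the degree bound on the $g_i$ (not their convexity) is what the encoding exploits is accurate and a nice clarification; the paper itself states the corollary without proof, so your write-up simply supplies the intended bookkeeping.
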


Using these facts, we proceed to cast program~\eqref{e-path-cost-to-go-synthesis} as an SDP.
\paragraph{Defining cost-to-go lower bounds in~\eqref{e-p-value-def}.}
We restrict lower bounds $J_v$ per vertex $v\in\mathcal V$ to be convex quadratic functions.
By \cref{lemma-quadratic}, searching for such functions is equivalent to searching for appropriate PSD matrices $Q_v$.
The decision variables are thus the coefficients of the quadratic polynomials.
As a result, we produce coarse quadratic lower bounds on the optimal~$J^*_v$; this coarseness will be mitigated via the multi-step lookahead policies.

Constraint~\eqref{e-p-penalty-def} is already linear, and constraint~\eqref{e-p-target-value} is linear in the coefficients of the quadratic polynomial $J_t$ and the decision variables $h_w$.
These constraints are thus already suitable for the SDP. 

\paragraph{Enforcing the lower-bound constraint~\eqref{e-p-lower-bound}.}
To apply \cref{lemma-verify-with-quadratic} to enforce this constraint, we impose additional restrictions.
First, we restrict vertex and edge sets $\mathcal X_v$ and $\mathcal X_e$ to be intersections of ellipsoids and polyhedra.
We also restrict edge lengths $l_e$ to be quadratic, ensuring that the expression in~\eqref{e-p-lower-bound} is quadratic.
For non-quadratic $l_e$, such as the Euclidean distance, we use a quadratic approximation instead.
Applying \cref{lemma-verify-with-quadratic}, we verify constraint~\eqref{e-p-lower-bound} with an LMI.

\paragraph{The objective function~\eqref{e-p-objective}.} 
Since $J_s$ is a quadratic polynomial, the integral in~\eqref{e-p-objective} is linear in the coefficients of $J_s$, which are the decision variables of the program. 
Therefore, the objective function~\eqref{e-p-objective} is linear in the decision variables, as required for the SDP.

\vspace{\baselineskip}

Empirically, we found quadratic lower bounds to be a good balance between computational complexity and expressive power.
Note that higher-degree polynomial lower bounds $J_v$ can be synthesized via the Sums-of-Squares (SOS) hierarchy~\cite{parrilo2000structured,parrilo2003semidefinite,lasserre2001global}.
However, in practice, the resulting programs tend to be prohibitively expensive.
On the other hand, restricting $J_{v}$ to be affine yields a program that almost exactly matches the dual to the convex relaxation of the SPP in GCS, discussed in~\cite[App.~B]{marcucci2024shortest}.
In other words, solving the SPP in GCS already gives a coarse affine cost-to-go lower bound that can be used to solve the APSP in GCS.
In \cref{ss-ablation-example}, we show that empirically, these affine lower bounds have significantly less expressive power than the quadratic lower bounds.

\section{Online phase: greedy multi-step lookahead policy}
\label{s-lookahead-policies}
We now generalize the greedy successor policy \eqref{e-successor-policy} from the classical APSP to the GCS setting.
Suppose that at runtime, we are given a source vertex $v_0 \in \mathcal S$ and a source point $x_0 \in \mathcal X_{v_0}$.
At iteration $k$ of the policy rollout, let $(v_k,x_k)$ be the current vertex and vertex variable, and let $p_k = (v_0, v_1, \ldots, v_{k-1})$ be the path so far.
The successor policy $\pi(v_k, x_k, p_k) = (v_{k+1}, x_{k+1})$, which we will define shortly, produces the next vertex $v_{k+1}$ and the corresponding vertex variable $x_{k+1}$.
We then advance to the next iteration.
The rollout terminates when we reach the target vertex $t$, where we must select the target point $x_t$.
Upon termination, we extract the vertex path $p = (v_0, v_1, \ldots, v_t)$ and re-optimize for the continuous vertex variables $(x_0, x_1, \ldots, x_t)$, so as to produce a trajectory that is optimal within this path. 

At each iteration of the policy rollout, we solve a greedy lookahead optimization problem with the coarse quadratic lower bounds obtained in \cref{ss-sdp}.
For simplicity, here we present just the 1-step lookahead program:
\begin{subequations}
\label{e-policy}
\begin{align}
\pi(v_k ,x_k, p_k) \quad=\quad \underset{(w, x_w)}{\arg\min}& \quad  l_e(x_k, x_w) + J_{w}(x_w) \label{e-policy-objective} \\
\text{s.t.} & \quad e=(v_k,w)\in\mathcal E, \quad w\notin p_k, \label{e-policy-discrete-con} \\
& \quad (x_k,x_w)\in\mathcal X_e. \label{e-policy-continuous-con}
\end{align}
\end{subequations}
Note that we do not use the penalty-incremented edge cost $l_e(x_k,x_w) + h_w$ in \eqref{e-policy-objective}, since the penalty $h_w$ is waived the first time that $w$ is visited. 
Vertex revisits are then also explicitly prohibited in~\eqref{e-policy-discrete-con}.

In a multi-step lookahead formulation, we instead solve for an $n$-step optimal decision sequence, take just the first step, and repeat at next iteration.
The multi-step lookahead is key for mitigating the coarseness of the quadratic lower bounds.
This is because an $n$-step lookahead from vertex $v$ effectively produces a piecewise-quadratic lower bound on the cost-to-go $J_v^*$ over $\mathcal X_v$, which has significantly more expressive power.
While these lower bounds can still be loose in theory, the multi-step lookahead enables effective decision-making in practice.

Convexity of $J_{w}$ is crucial, as it allows us to  solve the program~\eqref{e-policy} efficiently at run-time.
To find the minimizer to~\eqref{e-policy}, we solve multiple convex programs in parallel, one for every $n$-step lookahead sequence.

Finally, we note that the lookahead program~\eqref{e-policy} is not guaranteed to be  recursively feasible.
If we end up in a vertex where~\eqref{e-policy} has no solution, we backtrack to a previous vertex that has a different feasible outgoing edge, and retry from there.
Generally, our planner is sound but not complete: it is not guaranteed to produce a solution, but every solution it produces is feasible.

\section{Experimental evaluation}
\label{s-experiments}
We evaluate our approach through multiple numerical experiments.
\cref{ss-simple-intuitive-example} presents a simple two-dimensional problem that provides visual intuition to our method.
In \cref{ss-robot-arm-example}, we apply our approach to a complex high-dimensional scenario, the robot arm in \cref{f-robot-arm-title}.
Finally, \cref{ss-ablation-example} shows that our approach scales well to large graphs. 
We also discuss how the coarseness of the cost-to-go lower bounds and the multi-step lookahead horizon impact the performance.

All of the experiments are run on a desktop computer with a 4.5Ghz 16-core AMD Ryzen 9 processor and 64GB 4800MHz DDR5 memory.
We use Mosek 10.2.1~\cite{mosek} to solve all the convex programs in this section.

\subsection{Two-dimensional example}
\label{ss-simple-intuitive-example}

We first consider a two-dimensional GCS problem in~\cref{f-gcs-example}.
We have a graph $G$ with $|\mathcal V| = 9$ vertices, $|\mathcal E| = 25$ edges, including multiple cycles.
The geometry of the convex sets $\mathcal X_v$ can be deduced from~\cref{sf-g1}; no edge constraints $\mathcal X_e$ are used.
The edge costs $l_e(x_v,x_w) = \|x_v-x_w\|_2^2$ are the squared Euclidean distance.
The source vertex $s$ is a box, and the target vertex $t$ is a singleton.

We compute the convex quadratic lower bounds on the cost-to-go function at every vertex and visualize their contour plots in~\cref{sf-g1}.
In~\cref{sf-g2}, we depict the first three iterations of the 1-step lookahead rollout of the successor policy~\eqref{e-policy}.
At each iteration, we expand the neighbours of the current vertex and greedily select the next vertex $w$ and the vertex point $x_w$ that minimize the objective~\eqref{e-policy-objective}.
The rollout proceeds until the target vertex $t$ is reached.

\begin{figure}[t!]
    \centering
    \includegraphics[width=\textwidth]{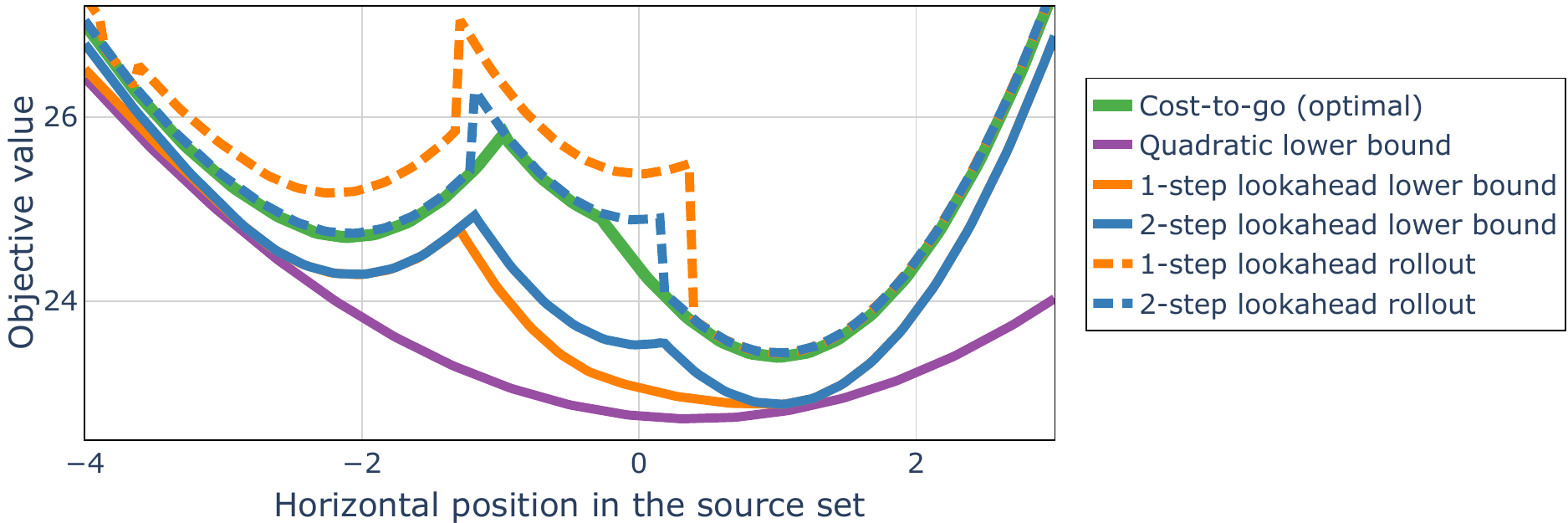}
    \caption{
    Comparison of lower and upper bounds on the cost-to-go over a horizontal slice of the source set $\mathcal X_s$ from \cref{sf-g1}.
    The cost-to-go function $J_s^*$ (green) is piecewise-quadratic.
    Convex quadratic lower bound $J_s$ (purple) is naturally a poor lower-bound.
    Multi-step lookaheads (solid orange, blue) produce tighter piecewise-quadratic lower bounds.
    Upper bounds on the cost-to-go are obtained by rolling out the multi-step lookahead policy (dashed orange, blue), which produces near-optimal solutions.
    }
    \label{f-lookahead-comparison}
\end{figure}

We evaluate the quality of the cost-to-go lower bounds and the resulting solutions in \cref{f-lookahead-comparison}.
The optimal shortest path cost-to-go function $J_{s}^*$ (green) is piecewise-quadratic.
Naturally, the convex quadratic lower bound $J_s$ (purple) is a poor lower bound to $J_{s}^*$.
The quality of the lower bound is greatly improved via multi-step lookaheads (solid lines, orange for 1-step, blue for 2-step).
A horizon-$n$ lookahead produces a piecewise-quadratic lower bound to $J_{s}^*$, with up to as many quadratic pieces as there are different $n$-step paths from the source vertex~$s$.
Though neither 1-step nor 2-step lookahead lower bounds are tight, they are sufficient for near-optimal decision making.
The costs of the rollouts of the successor policy are plotted as dashed lines; 2-step lookahead rollouts (blue) attain optimal solutions nearly always.

\subsection{Collision-free motion planning for a robot arm}
\label{ss-robot-arm-example}

We now demonstrate that our approach scales well to high-dimensional hardware systems.
We study multi-query collision-free motion planning for the KUKA iiwa robotic arm (\cref{f-robot-arm-title}), tasked with moving virtual items between shelves and bins.
Our methodology requires minimal additional offline computation, while delivering significant online speed up with negligible solution quality reduction.

We first produce an approximate polytopic decomposition of the 7-dimen\-sional collision-free configuration space of the arm.
This is done via the IRIS-NP algorithm \cite{petersen2023growing}, and we use IRIS clique seeding \cite{werner2023approximating} to obtain polytopes inside the shelves and bins.
We assign a GCS vertex $v$ per polytope in this decomposition.
The convex set $\mathcal X_v$ is the set of linear segments contained within the region, with the segment represented by its endpoints.
Two GCS vertices are connected by an edge if the corresponding regions overlap.
The resulting graph contains 23 vertices and 68 edges.
For each edge $e\!=\!(v,w)$, we constrain the linear segments at $v$ and $w$ to form a continuous path.
The path length is the sum of the Euclidean distances of the linear segments.
We define 12 source vertices (6 shelves, 2 vertices per shelf) and 3 target vertices (inside the left, front, and right bins).
To generate the quadratic lower bounds on the cost-to-go function, we use the generalization of \eqref{e-path-cost-to-go-synthesis} discussed in \cref{a-further-generalization}.

\begin{figure}[t!]
    \centering
    \includegraphics[width=\textwidth]{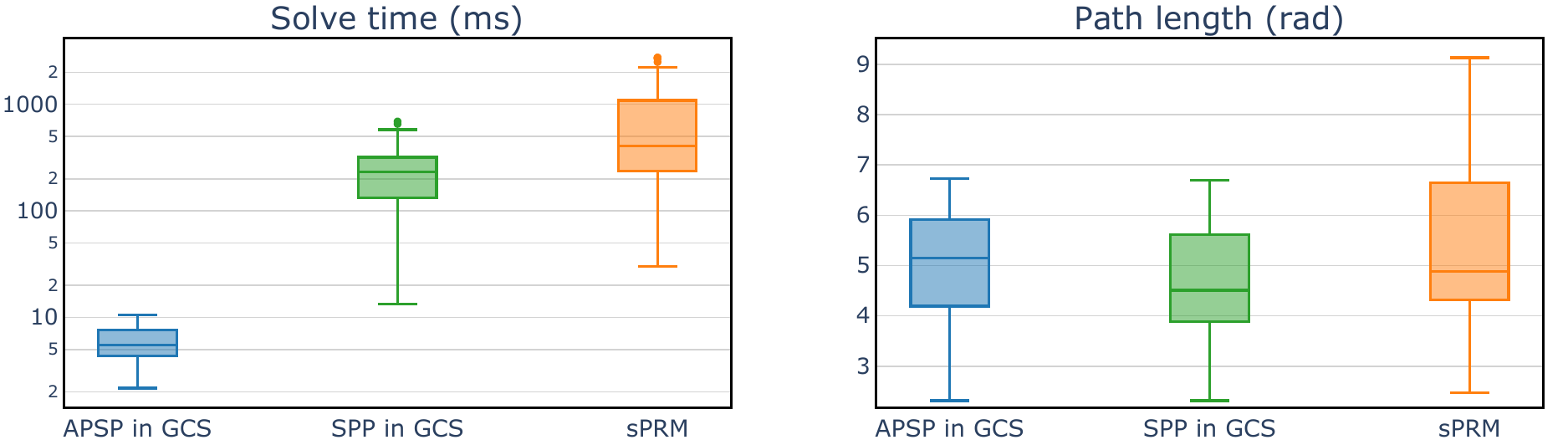}
        \caption{
        For the robot arm scenario in \cref{ss-robot-arm-example}, we compare path length and solve time performance between the APSP in GCS, single-query SPP in GCS, and shortcut PRM over 120 queries.
        The offline phases take 106s, 100s, and 0.9s respectively.
        The APSP in GCS is on average 40 times faster than the SPP in GCS, with minimal reduction in solution quality.
        Compared to sPRM, the APSP in GCS is on average 110 times faster.
    }
    \label{f-arm-numerics}
\end{figure}

We evaluate our algorithm in a multi-query scenario: at runtime, the arm is given a random next position to go to, alternating between shelves and bins.
We rollout a 1-step lookahead policy to generate paths from shelves to bins, and reverse them to obtain paths from bins to shelves.
We evaluate our approach on a total of 120 queries.
We compare our algorithm against solving the SPP in GCS from scratch, as well as against the shortcut PRM (sPRM) algorithm, which is its natural sampling based multi-query competitor.
We use a high-performance implementation of sPRM based on~\cite{prm-rob}, producing a large roadmap with 10,000 vertices.
Our solutions are visualized in \cref{f-robot-arm-title};
performance comparison is provided in \cref{f-arm-numerics}.
Similar to how the quality of the PRM solutions depends on the density of the PRM, the quality of solutions obtained with GCS depends on the quality of the polytopic decomposition of the collision-free configuration space. 
We thus make no claims about the optimality of the solutions in this section.

Offline, generating cost-to-go lower bounds takes only 6 seconds, which is just 6\% of the time that it takes to generate the polytopic decomposition  necessary to use GCS.
Then online, our policy rollouts are very fast, with a median solve time of 5ms and a maximum of 11ms (we report the parallelized solver time).
Our method is on average 40 time faster than the SPP in GCS, producing paths that are only 7\% longer on average.
Compared to sPRM, our method is on average 110 times faster and produces paths that are 5\% shorter on average.
We achieve consistent performance in both solve time and path length, unlike sPRM, which shows high variance in both.
Overall, compared to these state-of-the-art baselines, the APSP in GCS reduces the online solve times significantly, with minimal compromise in solution quality.

\subsection{Scalability and ablation on lower bounds and lookahead horizon}
\label{ss-ablation-example}
In this section, we demonstrate the scalability of our approach and analyze how the coarseness of the cost-to-go lower bounds and the lookahead horizon impact solution quality. 
First, we show that multi-step lookaheads with quadratic $J_{v}$ yield near-optimal solutions in large graphs. 
Second, we demonstrate that quadratic lower bounds significantly outperform the affine ones, which are available from the dual of the convex relaxation of the SPP in GCS~\cite[App. B]{marcucci2024shortest}.

\begin{figure}[t!]
    \centering
    \begin{subfigure}[t]{0.33\textwidth} 
        \centering        
        \includegraphics[width=\textwidth]{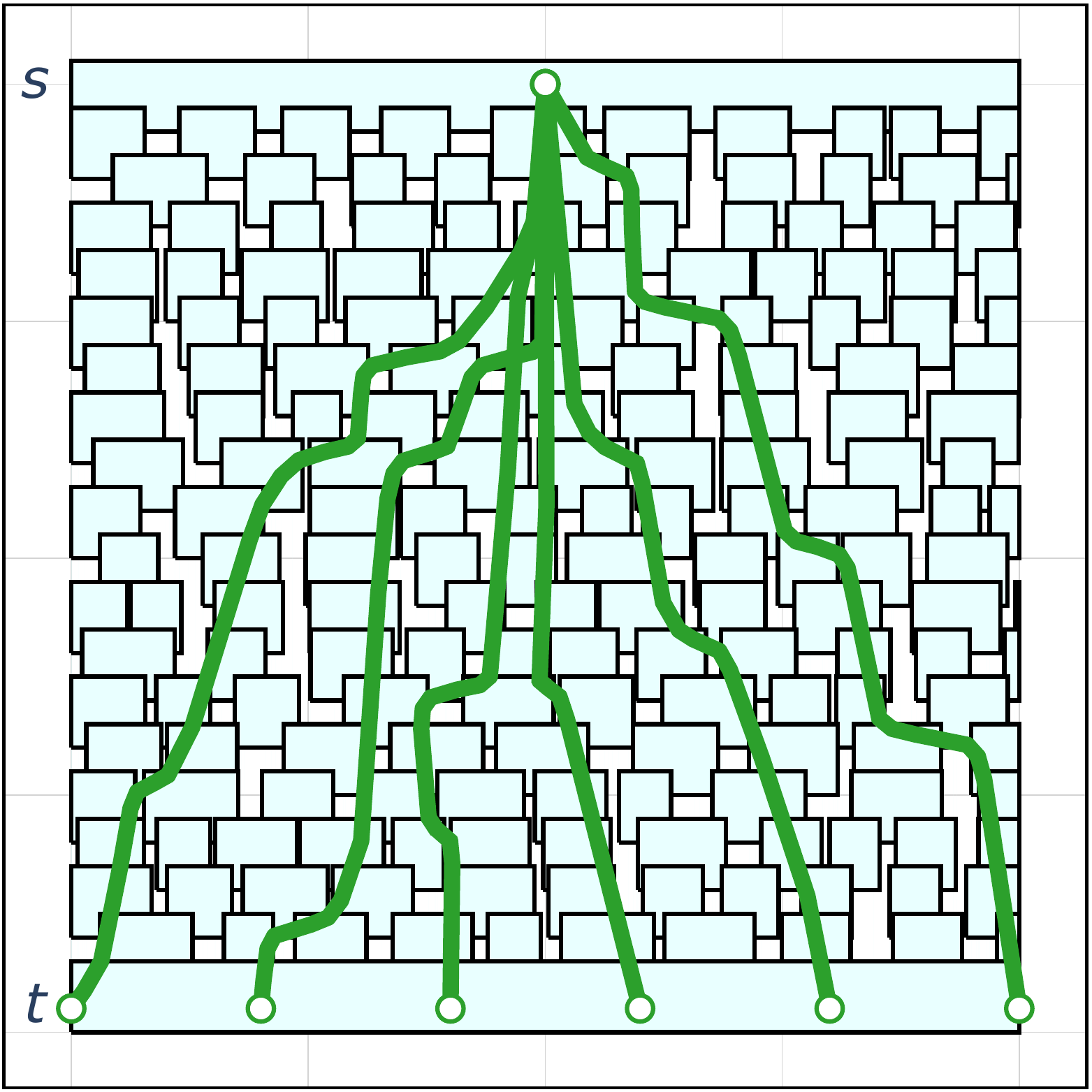}
        \caption{Optimal solution.}
        \label{sf-sa}
    \end{subfigure}\hfill
    \begin{subfigure}[t]{0.33\textwidth} 
        \centering        
        \includegraphics[width=\textwidth]{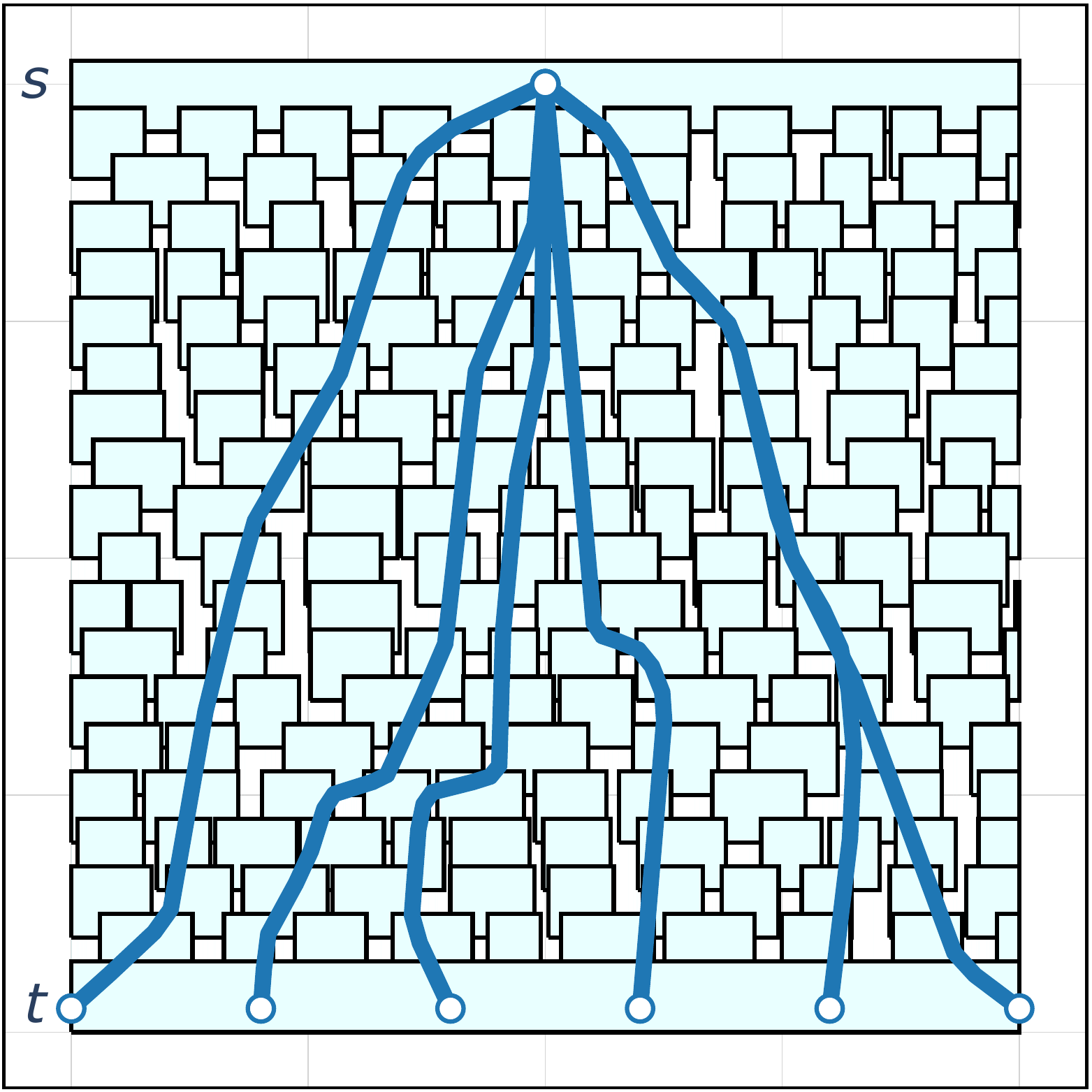}
        \caption{Quadratic lower bounds.}
        \label{sf-sb}
    \end{subfigure}\hfill
    \begin{subfigure}[t]{0.33\textwidth}
        \centering
        \includegraphics[width=\textwidth]{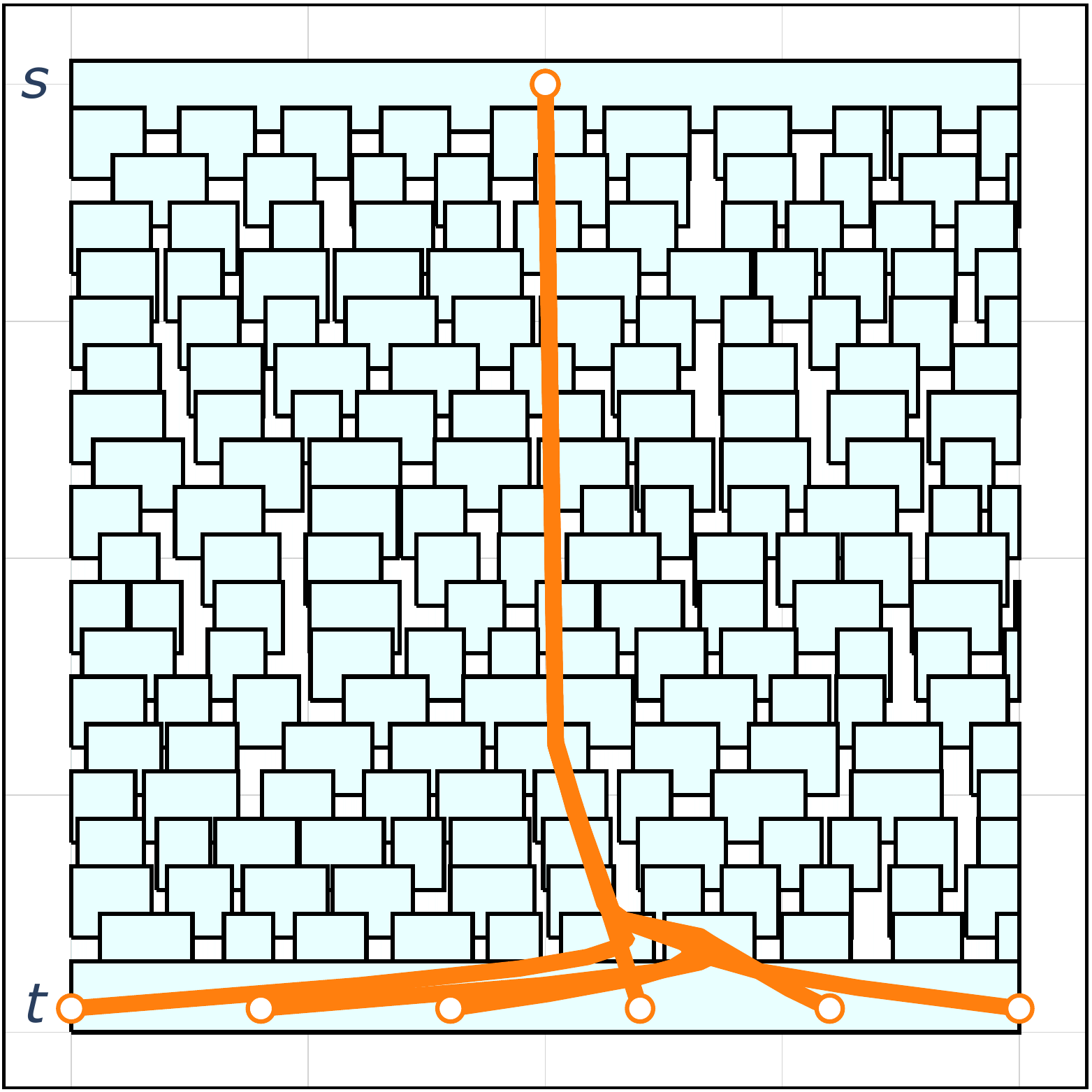}
        \caption{Affine lower bounds.}
        \label{sf-sc}
    \end{subfigure}
    \caption{
    A 3-step lookahead policy with quadratic $J_{v}$ (blue) yields diverse vertex paths resembling the optimal solutions (green).
    A 3-step lookahead with affine $J_{v}$ (orange) follows a single vertex sequence regardless of the target point, accruing much higher cost.
    }
    \label{f-3-birchtree-plots}
\end{figure}

We consider a randomly generated environment depicted in \cref{f-3-birchtree-plots}. 
We assign a GCS vertex $v$ for each teal box.
Each convex set $\mathcal X_v$ is the set of control points of a cubic Bézier curve within the box (see~\cite{marcucci2023motion}).
The GCS vertices are connected by a pair of edges if the corresponding teal boxes overlap.
The resulting graph has 190 vertices and 540 edges.
For each edge, we constrain the vertex Bézier curves to be differentiable at the transition point.
The path cost is the sum of squared Euclidean distances between the consecutive control points of the Bézier curves.
The source vertex $s$ is at the top, and the target vertex $t$ is at the bottom.

We synthesize the quadratic and affine lower bounds over the GCS, which takes 6s and 2s respectively.
We then uniformly sample 120 pairs of source and target conditions, and rollout the greedy policy using different lower bounds and lookahead horizons.
Optimal solutions are obtained by solving the MICP formulation of the SPP in GCS.
Numerical results are reported in \cref{t-numerics-ablation}.

\cref{t-numerics-ablation} shows that our approach scales well to large problem instances, yielding better solve times than the SPP in GCS.
A 2-3 step lookahead policy with a quadratic cost-to-go lower bound produces near-optimal solutions (8-9\% median suboptimality) in under 10ms.
The SPP in GCS produces slightly better solutions (7\% median suboptimality), but due to the size of the graph, the solve-time increases to over 1000ms.
For large graph instances, incremental search through the graph via the APSP in GCS achieves competitive solution quality while reducing solve times by up to two-three orders of magnitude.

Finally, \cref{t-numerics-ablation} shows that quadratic lower bounds with short-horizon lookaheads offer a good balance between expressive power and solve times. 
A 3-step lookahead policy with affine lower bounds has a median suboptimality of 80.2\%, compared to 8.8\% with quadratic lower bounds. 
Achieving similar solution quality with affine lower bounds requires a lookahead horizon of 8-9 steps, but the resulting rollouts take significantly more time. 
\cref{f-3-birchtree-plots} shows that 3-step lookahead rollouts with affine lower bounds fail to capture the diversity of optimal solutions. 
Additionally, low-horizon lookahead policies with affine lower bounds often fail to produce solutions within a reasonable number of iterations, as demonstrated by the failure rate statistics.
Overall, we observe that the lookahead policies with quadratic lower bounds perform much better than those with affine ones.

\begin{table}[t!]
	\centering
	\begin{tabular}{@{}l @{\hspace{0.5em}}| @{\hspace{0.5em}} r @{\hspace{0.5em}} r @{\hspace{0.5em}} r@{}} %
	\toprule
	\textbf{Solution method} & \textbf{Optimality gap, \%} & \textbf{Solve time, ms} & \textbf{Failure rate, \%}  \\ 
	\bottomrule
Quadratic $J_{v}$, 1-step         &  20.0 (62.1)  &  3 (3)  &  0.0 \\
Quadratic $J_{v}$, 2-step         &  9.4 (22.3)   &  4 (4)  &  0.0 \\
Quadratic $J_{v}$, 3-step         &  8.8 (15.7)  &  5 (6)  &  0.0 \\
\midrule
Affine $J_{v}$, 1-step         &  157.1 (\text{N/A})  &  2 (657)  &  27.2 \\
Affine $J_{v}$, 2-step         & 142.4 (418.8)  &  3 (914)  &  14.0 \\
Affine $J_{v}$, 3-step         & 80.2 (348.3)  &  5 (808)  &  9.9 \\
Affine $J_{v}$, 8-step         & 11.9 (37.4)  &  169 (1996)  &  3.3 \\
Affine $J_{v}$, 9-step         & 7.0 (26.2)  &  388 (2454)  &  0.0 \\
\midrule
SPP in GCS          &  6.9 (12.0)  &  716 (1051)  &  0.0 \\
\bottomrule	
	\end{tabular}
    \vspace{5pt}
    \caption{Impact of the degree of $J_{v}$ and lookahead horizon on performance, over 120 queries for the GCS in \cref{f-3-birchtree-plots}.
    We report optimality gaps (ratio between solution cost and optimal cost), solve times, and failure rates (rollout policy is terminated after 10,000 iterations).
    We report median values, with the 75th percentile in the parenthesis.
    Low-horizon lookahead policies with quadratic lower bounds yield near optimal solutions, perform much better than the affine bounds.
    } \label{t-numerics-ablation}
\end{table}

\section{Conclusion and future work}
\label{s-discussion}
In this work, we generalized the classical All-Pairs Shortest-Paths problem to the Graphs of Convex Sets, and developed practical approximate numerical methods for solving this problem.
We demonstrated that a coarse lower bound on the cost-to-go with a greedy multi-step lookahead policy produce near-optimal paths, while significantly reducing solve times.
Our methodology effectively scales to high-dimensional set scenarios and large graph instances, enabling practical robotics applications in multi-query settings.
We plan to provide an efficient implementation of our approach  within the Drake library~\cite{drake}.

For hardware applications in non-static environments, we are interested in ways to tackle changes to the robot's configuration space, like those arising in object manipulation, as well as addition and removal of obstacles.
Assuming the changes are minor, the online search via the multi-step lookahead policy provides natural local adaptation.
Changes to the environment can be incorporated into the online policy rollout program~\eqref{e-policy} via non-convex constraints, similar to~\cite{vonusing}.

Finally, we are interested in exploring alternative incremental search policies beyond the multi-step lookahead policy.
We expect randomized rollouts inspired by MCTS~\cite{browne2012survey} and A*-based approaches like~\cite{chia2024gcs} to be effective.

\clearpage

\bibliographystyle{splncs04}
\bibliography{refs.bib}

\newpage
\appendix

\section{Extensions and variations}
\label{a-further-generalization}

We briefly remark on various natural generalizations to program \eqref{e-path-cost-to-go-synthesis}.

\begin{enumerate}
\item Suppose the set of source vertices $\mathcal S$ has more than one vertex.
To simultaneously ``push up'' lower bounds $J_s$ per vertex $s\in\mathcal S$, we add extra integral terms to the objective function \eqref{e-p-objective}.
\item Suppose the target set $\mathcal X_t$ is not a singleton, but a compact convex set.
First, we modify the constraint \eqref{e-p-value-def} to search for $J_{v,t}:\mathcal X_v \times \mathcal X_t \rightarrow \mathbb R$.
The function $J_{v,t}(x_v,x_t)$ is a lower bound on the cost-to-go of the shortest path from $x_v$ of vertex $v$ to $x_t$ of vertex $t$.
Similarly, the probability distribution $\phi_{s,t}$ is now supported on $\mathcal X_s\times\mathcal X_t$, so as to push
up on $J_{s,t}(x_s,x_t)$ over all source-target pairs $(x_s,x_t)$.
The lower-bound constraint \eqref{e-p-lower-bound} is adjusted to include $x_t\in\mathcal X_t$:
$$
J_{v,t}(x_v, x_t) \leq  l_e(x_v, x_w) +  h_w + J_{w,t}(x_w, x_t),
$$
for all edges $e=(v,w)\in\mathcal E$, and all points $(x_v,x_w)\in \mathcal X_e$ and $x_t\in\mathcal X_t$.
Finally, the target constraint \eqref{e-p-target-value} is adjusted to be $J_{t,t}(x_t, x_t) = -\sum_{w\in\mathcal V} h_w$, for all $x_t\in\mathcal X_t$.
\item The scalar vertex penalty $h_w$ is generalized to be a non-negative function of the target state $x_t$, that is: $h_{w,t}:\mathcal X_t\rightarrow \mathbb R_+$.
We thus replace $h_w$ with $h_{w,t}(x_t)$ and update the constraint \eqref{e-p-target-value} as follows:
$$
J_{t,t}(x_t, x_t) = -\sum_{w\in\mathcal V} h_w(x_t),
$$
further tightening the resulting lower-bounds.
\item Suppose the set of target vertices $\mathcal T$ has more than one vertex.
To obtain the cost-to-go lower bounds for every pair of vertices $v\in\mathcal V$ and $t\in\mathcal T$, we solve multiple programs \eqref{e-path-cost-to-go-synthesis} in parallel, one per target vertex $t\in\mathcal T$.
\item In general, the successor policy \eqref{e-policy} is also a function of terminal vertex $t$ and terminal point $x_t$. 
The generalized 1-step lookahead program is as follows:
\begin{align*}
\pi(v_k ,x_k, p_k, t, x_t) \quad=\quad \underset{(w, x_w)}{\arg\min}& \quad  l_e(x_k, x_w) + J_{w,t}(x_w, x_t) \\
\text{s.t.} & \quad e=(v_k,w)\in\mathcal E_{v_k}^{\text{out}}, \quad w\notin p_k, \nonumber\\
& \quad (x_k,x_w)\in\mathcal X_e. \nonumber
\end{align*}
\item 
Other penalties, similar to the vertex visitation penalties $h_{v}$, can be added to improve the quality of the lower bounds. 
For instance, consider a 2-cycle with edges $(v,w)$ and $(w,v)$. 
We can add edge penalties $h_{v,w} = h_{w,v}$ for traversing either edge.
By subtracting $h_{v,w}$ from the cost-to-go lower bound at the target, we effectively ensure that no penalty is incurred for traversing just one (but not both) of the edges.
This can be extended to cycles of arbitrary length.
\end{enumerate}

\end{document}